\date{}
\DeclareFontShape{T1}{lmr}{b}{sc}{<->ssub*cmr/bx/sc}{}
\DeclareFontShape{T1}{lmr}{bx}{sc}{<->ssub*cmr/bx/sc}{}
\theoremstyle{definition} 
\newtheorem{theorem}{Theorem}
\newtheorem{definition}[theorem]{Definition}
\newtheorem{lemma}[theorem]{Lemma}
\newtheorem*{convention*}{Convention}
\def\T/{\ensuremath{\mathcal{T}}}
\def\NegSat/{\texorpdfstring{\textsc{Ne\-ga\-tor-Sat}}{NEGATOR-SAT}}
\def\NegPath/{\texorpdfstring{\textsc{Ne\-ga\-tor-Path}}{NEGATOR-PATH}}
\def\NegStab/{\texorpdfstring{\textsc{Ne\-ga\-tor-Sta\-bil\-i\-ty}}{NEGATOR-STABILITY}}
\def\dreisat/{\textsc{3-Sat}}
\newcommand*{\GrossO}[1]{\ensuremath{\mathcal{O}\left(#1\right)}}
\DeclareMathOperator{\poly}{poly}
\newcommand*\phantomas[3][c]{%
\ifmmode
\makebox[\widthof{$#2$}][#1]{$#3$}%
\else
\makebox[\widthof{#2}][#1]{#3}%
\fi
}
\begin{document}
\author{Eric Hutter}
\author{Mathias Pacher}
\author{Uwe Brinkschulte}
\affil{Institut für Informatik \\ Goethe University Frankfurt, Germany \\ \texttt{\{hutter,pacher,brinks\}@es.cs.uni-frankfurt.de}}
\title{On the Hardness of Problems Involving Negator Relationships \\ in an Artificial Hormone System}
\maketitle

\begin{abstract}
The Artificial Hormone System (AHS) is a self-organizing middleware to allocate tasks in a distributed system.
We extended it by so-called negator hormones to enable conditional task structures.
However, this extension increases the computational complexity of seemingly simple decision problems in the system:
In \cite{HutterISORC2020} and \cite{HutterSENSYBLE2020}, we defined the problems \NegPath/ and \NegSat/ and proved their NP-completeness.
In this supplementary report to these papers, we show examples of \NegPath/ and \NegSat/, introduce the novel problem \NegStab/ and explain \emph{why} all of these problems involving negators are hard to solve algorithmically.
\end{abstract}

\section*{Note}
This is a supplementary report to \cite{HutterISORC2020} and \cite{HutterSENSYBLE2020}, extending the results of both papers.

Although this report includes the most relevant information required for understanding, we nevertheless expect the reader to be familiar with at least one of the aforementioned papers.

\section{Introduction}
The \emph{Artificial Hormone System} (AHS) \cite{brinkschulteOrganicRealTimeMiddleware2013} is a middleware based on Organic Computing principles to allocate tasks in distributed systems.
The task allocation is performed by realizing closed control loops based on exchange of short digital messages, called \emph{hormones}.

The AHS is self-configuring (meaning it automatically finds a working initial task distribution based on the processing elements (PEs) suitability for the different tasks) and self-healing (in case a PE fails, its tasks are automatically re-assigned to healthy PEs).
Furthermore, by being completely decentralized, it has no single point of failure.

However, it assumes all tasks to be independent.
Thus, in previous works, we introduced the concept of \emph{negator hormones} into the AHS to enable modeling task dependencies.

This allows to e.g. realize the dependency shown in Figure~\ref{fig:ConceptionNegator}: Here, task $T_i$ may only be assigned to a processing element (PE) if task $T_j$ is \emph{not} assigned.
Such dependency can be represented by the tuple $(T_j, T_i)$.

\begin{figure}
	\centering
	\includegraphics[page=1, scale=2]{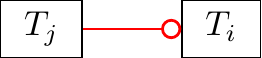}
	\caption{Negator relationship between tasks $T_j$ and $T_i$: If $T_j$ is assigned, $T_i$ must not be assigned}
	\label{fig:ConceptionNegator}
\end{figure}

This report's contribution is four-fold:
\begin{enumerate}
	\item We revisit two decision problems involving negators and present an alternative proof for one problem's NP-hardness.
	\item We introduce an additional decision problem involving negators and prove its NP-completeness.
	\item We simulate instances of these problems in an AHS simulator.
	\item We describe concepts of realizing arbitrary logic circuits using negator relationships, giving an explanation \emph{why} the mentioned problems are NP-complete.
\end{enumerate}

The report is structured as follows:
Section~\ref{sec:CompComp} recaps the problems \NegPath/ and \NegSat/, presents an alternative proof of \NegPath/'s NP-hardness, introduces the novel decision problem \NegStab/ and proves its NP-completeness.
Section~\ref{sec:Examples} shows examples for these problems as well as simulations within an AHS simulator.
Section~\ref{sec:boolLogicWithNegators} shows ways to construct arbitrary logic circuits using negator relationships.
Finally, Section~\ref{sec:Conclusion} concludes this report.

\section{Problems Involving Negator Relationships}
\label{sec:CompComp}

In this section, we will examine problems involving negator relationships.
The problems \NegPath/ and \NegSat/ have been analyzed in previous work while \NegStab/ is a novel problem.

\subsection{Recapitulation of previous problems}
We will first recapitulate the previously analyzed problems.

\subsubsection{\NegPath/}
In \cite{HutterISORC2020}, we considered the following problem:
\begin{definition}[\NegPath/]\label{def:NegPath}
Let $\T/$ be a finite set of tasks, $\mathcal{N} \subseteq \T/ \times \T/$ a set of negator relationships among those tasks and
$\mathcal{C} \subseteq \T/ \times \T/$ a set of communication relationships between those tasks
(with $(S,T)\in \mathcal{C}$ iff task $S$ is allowed to send a message to task $T$).

The decision problem \NegPath/ is now stated as follows:
Given two assigned tasks $A, B \in \T/$, does a set of assigned tasks $\mathcal{V}\subseteq \T/$ exist
(with $T \in \mathcal{V}$ iff task $T\in \T/$ is assigned to a PE) so that the following
conditions are all satisfied:
\begin{enumerate}[label=(\arabic*)]
\item There is no task $T \in (\T/ \setminus \mathcal{V})$ that could be assigned to a PE even if all PEs had infinite computational resources,\label{item:NegPath:Def:1}
\item $\mathcal{V}$ is a stable task assignment, i.e. all negator relationships among tasks from $\mathcal{V}$ are respected,\label{item:NegPath:Def:2}
\item $A$ can send a message to $B$ (possibly via multiple hops), i.e. there exists a path from $A$ to $B$ using only edges from $\mathcal{C}$ with
$A$, $B$ and all intermediate tasks contained in $\mathcal{V}$.\label{item:NegPath:Def:3}
\end{enumerate}
\end{definition}
Put in simple terms, \NegPath/ asks whether a stable task assignment exists so that a message sent by task $A$ can be received by task $B$, possibly transformed by other tasks along the way.
Condition~\ref{item:NegPath:Def:1} prevents the system's computational capacities from imposing any limits on such task assignment.

\subsubsection{\NegSat/}
In \cite{HutterSENSYBLE2020}, we looked at a similar problem:
\begin{definition}[\NegSat/]\label{def:NegSat}
Let $\T/$ be a finite set of tasks and $\mathcal{N} \subseteq \T/ \times \T/$ a set of negator relationships among those tasks.

The decision problem \NegSat/ is now stated as follows:
Given a task $A \in \T/$, does a set of assigned tasks $\mathcal{V}\subseteq \T/$ exist
(with $T \in \mathcal{V}$ iff $T$ is assigned to a PE) so that the following
conditions are all satisfied:
\begin{enumerate}[label=(\arabic*)]
\item There is no task $T \in (\T/ \setminus \mathcal{V})$ that could be assigned to a PE even if all PEs had infinite computational resources,\label{item:NegSat:Def:1}
\item $\mathcal{V}$ is a stable task assignment, i.e. all negator relationships among tasks from $\mathcal{V}$ are respected,\label{item:NegSat:Def:2}
\item $A\in \mathcal{V}$, i.e. task $A$ is assigned to some PE.\label{item:NegSat:Def:3}
\end{enumerate}
\end{definition}

In simple terms, \NegSat/ asks whether a stable task assignment exists so that $A$ can be assigned to a PE.
Similarly to \NegPath/, condition~\ref{item:NegSat:Def:1} prevents the system's computational capacities from imposing any limits on such task assignment.

\subsubsection{Computational Complexity}
While both problems seem to be simple at a first glance, they turn out to be quite hard in terms of computational complexity:
\begin{theorem}\label{thm:NegPathNPComplete}
	\NegPath/ is NP-complete.
\end{theorem}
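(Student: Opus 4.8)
I need to show both membership in NP and NP-hardness.

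For membership in NP, the natural certificate is the set $\mathcal{V}\subseteq\T/$ itself. Given $\mathcal{V}$, I can check in polynomial time: condition~\ref{item:NegPath:Def:2} by scanning all pairs in $\mathcal{N}$ and verifying no $(S,T)\in\mathcal{N}$ has both $S,T\in\mathcal{V}$; condition~\ref{item:NegPath:Def:3} by a reachability search (BFS/DFS) from $A$ in the subgraph of $\mathcal{C}$ induced on $\mathcal{V}$; and condition~\ref{item:NegPath:Def:1} — that no task outside $\mathcal{V}$ "could be assigned" even with infinite resources — which I expect to reduce to checking that every $T\in\T/\setminus\mathcal{V}$ has some negator $(S,T)\in\mathcal{N}$ with $S\in\mathcal{V}$ (i.e. $T$ is blocked by an assigned task). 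This is the subtle point where I must pin down the exact semantics of "could be assigned" from the AHS model: the intended reading is that a task is assignable unless a negator forbids it, so condition~\ref{item:NegPath:Def:1} says $\mathcal{V}$ is a \emph{maximal} stable assignment in the sense that every unassigned task is explicitly blocked.

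For NP-hardness, the plan is a reduction from \dreisat/. Given a 3-CNF formula $\varphi$ with variables $x_1,\dots,x_n$ and clauses $c_1,\dots,c_m$, I would build a task set and a negator relation encoding a truth assignment: introduce for each variable $x_i$ a pair of tasks $T_{x_i}$ and $T_{\overline{x_i}}$ with mutual negators $(T_{x_i},T_{\overline{x_i}})$ and $(T_{\overline{x_i}},T_{x_i})$, so exactly one of each literal pair is assigned in any maximal stable $\mathcal{V}$ (here condition~\ref{item:NegPath:Def:1} is what forces \emph{at least} one to be chosen, and the mutual negator forces \emph{at most} one). Then I need gadgets that make the existence of a valid $\mathcal{V}$ with an $A$-to-$B$ path equivalent to $\varphi$ being satisfiable; the path requirement is the mechanism for enforcing that all clauses are satisfied — e.g. a communication chain $A\to\text{(clause gadget }c_1)\to\cdots\to\text{(clause gadget }c_m)\to B$ where the clause-$c_j$ gadget is traversable iff at least one of its three literal-tasks is in $\mathcal{V}$. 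Realizing a "traversable iff at least one of three specified tasks is assigned" gadget using only negator and communication edges, while not disturbing condition~\ref{item:NegPath:Def:1} on the auxiliary tasks, is the crux; one approach is to route the path through each literal-task directly (so the chain can bypass a clause gadget through whichever literal of that clause is true), possibly with helper tasks whose assignment status is pinned by extra negators so they don't interfere.

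The main obstacle I anticipate is the interplay of condition~\ref{item:NegPath:Def:1} with the gadget construction: every auxiliary task I add must either be forced into $\mathcal{V}$ or have an explicit negator-blocker in $\mathcal{V}$, otherwise the maximality condition is violated and no valid $\mathcal{V}$ exists at all. So each gadget must be designed so that its internal tasks have a well-defined, consistent assignment status under any choice of the variable-literal tasks, and the only genuine freedom left is the $2^n$ choices of truth values. Verifying this "no stray assignable task" invariant across all gadgets, and confirming that a satisfying assignment yields a $\mathcal{V}$ with the required $A$–$B$ path while an unsatisfying one blocks every such path, is where the detailed case analysis will go; the reduction is clearly polynomial-time since the number of tasks and relationships is linear in $n+m$. (If a clean direct gadget proves awkward, an alternative is to reduce from \NegSat/ once its NP-hardness is available, by attaching a trivial one-edge communication path $A\to B$ and identifying the \NegSat/ target task with $A$ — but since the theorem is stated before \NegSat/'s hardness, I would keep the self-contained \dreisat/ reduction as the primary argument.)
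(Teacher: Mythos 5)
Your proposal follows essentially the same route as the paper's (cited) proof: membership in NP via the certificate $\mathcal{V}$ with polynomial-time verification of the three conditions (including reading condition~\ref{item:NegPath:Def:1} as ``every unassigned task has an inbound negator from an assigned task''), and NP-hardness by reducing \dreisat/ using mutually-negating literal-task pairs and a communication chain from $A$ to $B$ threaded through clause gadgets that are traversable iff some literal task is assigned --- which is exactly the structure of the construction from \cite{HutterISORC2020} illustrated in Section~\ref{sec:Examples:Theoretical:NegPath}. Your parenthetical fallback (reducing from \NegSat/ by attaching communication edges around $A$) is precisely the alternative proof the paper gives as Lemma~\ref{lemma:NegPathNPHard}.
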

\begin{proof}
	See \cite{HutterISORC2020}.
\end{proof}

\NegPath/ asks for the existence of a communication path and may thus require multiple tasks to be assigned.
Intuitively, asking whether a given \emph{single} task can be assigned at all, seems to be a simpler problem.
However, this is not the case:
\begin{theorem}\label{thm:NegSatNPComplete}
	\NegSat/ is NP-complete.
\end{theorem}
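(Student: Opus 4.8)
The plan is to establish membership in NP and then NP-hardness by a reduction from \dreisat/.

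Membership is immediate: a set $\mathcal{V} \subseteq \T/$ serves as a polynomial-size certificate, since the three conditions of Definition~\ref{def:NegSat} are all checkable in polynomial time --- condition~(1) amounts to verifying that every $T \in \T/ \setminus \mathcal{V}$ has some $S \in \mathcal{V}$ with $(S,T) \in \mathcal{N}$, condition~(2) amounts to verifying that no $(S,T) \in \mathcal{N}$ has both $S,T \in \mathcal{V}$, and condition~(3) is a single membership test. (Jointly, conditions~(1) and~(2) say that $T \in \mathcal{V}$ if and only if none of $T$'s negators lie in $\mathcal{V}$, so an admissible $\mathcal{V}$ is essentially a kernel of the digraph $(\T/,\mathcal{N})$ with its edges reversed; this is the intuitive source of the hardness.)

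For the hardness direction, let $\varphi$ be a \dreisat/ formula with variables $x_1,\dots,x_n$ and clauses $C_1,\dots,C_m$. I would construct a \NegSat/ instance as follows. For each variable $x_i$, introduce two tasks $p_i$ and $n_i$ (intended meaning ``$x_i$ true'' resp.\ ``$x_i$ false'') together with the negator relationships $(p_i,n_i)$ and $(n_i,p_i)$; since these are the \emph{only} negators of $p_i$ and of $n_i$, conditions~(1) and~(2) jointly force exactly one of $p_i,n_i$ into any admissible $\mathcal{V}$, so such a $\mathcal{V}$ encodes a truth assignment. For each clause $C_j$, introduce a task $c_j$ and, for every literal occurring in $C_j$, a negator relationship pointing from the corresponding variable task ($p_i$ for a positive occurrence of $x_i$, $n_i$ for a negative one) to $c_j$; thus $c_j$ is blocked precisely when $C_j$ is satisfied. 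Finally, let the queried task $A$ be a fresh task and add the negator relationship $(c_j,A)$ for every $j$. This instance has $2n+m+1$ tasks and $2n+4m$ negator relationships, so it is computable in polynomial time.

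Correctness follows from two short arguments. If $\varphi$ has a satisfying assignment, let $\mathcal{V}$ contain $A$ together with the variable task chosen for each $x_i$ (and no clause task); conditions~(1)--(3) are then checked directly, the key point being that every $c_j$ is ``absorbed'' because its clause contains a satisfied literal whose task lies in $\mathcal{V}$. Conversely, from any admissible $\mathcal{V}$ with $A \in \mathcal{V}$, condition~(2) applied to the relationships $(c_j,A)$ forces every $c_j$ out of $\mathcal{V}$, whereupon condition~(1) forces at least one literal task of each clause into $\mathcal{V}$; together with the ``exactly one of $p_i,n_i$'' property this yields a satisfying assignment of $\varphi$. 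I expect the only real obstacle to be the bookkeeping around condition~(1): one must ensure that no unintended negator relationships are created (so that $p_i$'s sole negator is $n_i$, and $c_j$'s negators are exactly the literal tasks of $C_j$) and that the forcing chain ``$A \in \mathcal{V} \Rightarrow c_j \notin \mathcal{V} \Rightarrow C_j$ satisfied'' cannot be short-circuited; with the edge set chosen exactly as above, all of this is local and elementary.
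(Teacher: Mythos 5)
Your proof is correct and follows essentially the same route as the one the paper defers to in \cite{HutterSENSYBLE2020}: guess-and-verify for NP membership, and a reduction from \dreisat/ using mutually negating variable-task pairs, clause tasks blocked by satisfied literals, and a target task $A$ blocked by the clause tasks (the two-stage NOR construction with the mutual-exclusion building block described in Sections~\ref{sec:Examples:Theoretical:NegSat} and \ref{sec:boolLogicWithNegators}). No gaps; the bookkeeping around condition~\ref{item:NegSat:Def:1} is handled exactly as the construction requires.
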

\begin{proof}
	See \cite{HutterSENSYBLE2020}.
\end{proof}

Thus, in terms of computational complexity, both problems are equally hard.
This shows the power introduced by negators:
Unless $\text{P}=\text{NP}$ holds, it is not possible to decide either problem in deterministic polynomial time.

\subsubsection{Alternative Proof for NP-hardness of \NegPath/}
The proof of \NegPath/'s NP-hardness from \cite{HutterISORC2020} used a polynomial-time reduction of \dreisat/ to \NegPath/.
However, when utilizing the NP-completeness of \NegSat/, an elegant and short alternative proof is possible:
\begin{lemma}
	\label{lemma:NegPathNPHard}
	\NegPath/ is NP-hard.
\end{lemma}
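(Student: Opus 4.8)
The goal is to show that \NegPath/ is NP-hard, and the natural approach—given that Theorem~\ref{thm:NegSatNPComplete} already establishes NP-completeness of \NegSat/—is a polynomial-time many-one reduction from \NegSat/ to \NegPath/. So I would start with an arbitrary instance of \NegSat/, i.e.\ a finite task set $\T/$, a negator relation $\mathcal{N}\subseteq\T/\times\T/$, and a distinguished task $A\in\T/$, and build from it an instance of \NegPath/ that is a ``yes''-instance if and only if the \NegSat/ instance is.

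**The construction.** The idea is that asking ``can $A$ be assigned?'' is the same as asking ``is there a (trivial) path from $A$ to $A$?''. Concretely, I would take the same $\T/$ and the same $\mathcal{N}$ for the \NegPath/ instance, set the communication relation $\mathcal{C}$ to something harmless (e.g.\ $\mathcal{C}=\emptyset$, or $\mathcal{C}=\{(A,A)\}$—whatever the formal definition of ``path'' in \cite{HutterISORC2020} requires for a length-zero path from a vertex to itself), and choose the two distinguished tasks of the \NegPath/ instance to be $A$ and $B:=A$. Conditions~\ref{item:NegPath:Def:1} and~\ref{item:NegPath:Def:2} of \NegPath/ are then literally identical to conditions~\ref{item:NegSat:Def:1} and~\ref{item:NegSat:Def:2} of \NegSat/, and condition~\ref{item:NegPath:Def:3}—existence of a path from $A$ to $B=A$ inside $\mathcal{V}$—reduces to the single requirement $A\in\mathcal{V}$, which is exactly condition~\ref{item:NegSat:Def:3}. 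Hence the witnessing sets $\mathcal{V}$ coincide, so the two instances are equivalent. The reduction is clearly computable in polynomial (indeed linear) time, since it just copies the input and adds at most one communication edge.

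**Main obstacle.** There is no deep obstacle here; the only thing requiring care is the precise meaning of ``path'' in Definition~\ref{def:NegPath}—in particular whether a zero-hop ``path'' from $A$ to itself is admitted when $A=B$, or whether one must instead use a one-edge self-loop in $\mathcal{C}$. Depending on that convention, $\mathcal{C}$ is taken to be either $\emptyset$ or $\{(A,A)\}$; either way condition~\ref{item:NegPath:Def:3} collapses to ``$A\in\mathcal{V}$''. One should also note explicitly that \NegSat/ is NP-hard (immediate from Theorem~\ref{thm:NegSatNPComplete}), so that the reduction witnesses NP-hardness of \NegPath/. Combining this with membership in NP (already part of Theorem~\ref{thm:NegPathNPComplete}, or re-derivable since a witness $\mathcal{V}$ can be checked in polynomial time) would re-prove Theorem~\ref{thm:NegPathNPComplete}, but for the lemma only the hardness direction is needed.
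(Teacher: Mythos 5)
Your overall strategy is exactly the one the paper uses: a polynomial-time reduction from \NegSat/ to \NegPath/, exploiting that conditions~\ref{item:NegPath:Def:1} and \ref{item:NegPath:Def:2} coincide with conditions~\ref{item:NegSat:Def:1} and \ref{item:NegSat:Def:2} and that only the path condition needs to be engineered to mean ``$A$ is assigned.'' The difference is in the gadget. You collapse the path condition by setting $B:=A$ and taking $\mathcal{C}=\emptyset$ or $\mathcal{C}=\{(A,A)\}$, which makes the argument hinge on conventions you yourself flag as unsettled: whether Definition~\ref{def:NegPath} admits $A=B$ at all (``given two assigned tasks'') and whether a zero-hop or self-loop ``path'' counts as ``$A$ can send a message to $B$ (possibly via multiple hops).'' The paper avoids this ambiguity by adding two fresh tasks $X$ and $Y$ with communication edges $(X,A)$ and $(A,Y)$ and asking for a path from $X$ to $Y$; since $X$ and $Y$ have no inbound negators, condition~\ref{item:NegPath:Def:1} forces them into $\mathcal{V}$, so a genuine two-hop path exists iff $A\in\mathcal{V}$. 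Your reduction is arguably even more economical, but to make it airtight you would either have to pin down the path convention or switch to a one-sided variant of the paper's trick (e.g.\ a single fresh task $Y$ with edge $(A,Y)$ and endpoints $A,Y$), which removes the degenerate-path issue at essentially no cost.
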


The following alternative proof is based on the idea that if an algorithm were known that could decide \NegPath/ in polynomial time, one could use it to also decide \NegSat/ in polynomial time:

\begin{proof}
By reduction of \NegSat/ to \NegPath/:

\begin{figure}
\centering
\includegraphics[max width=\linewidth]{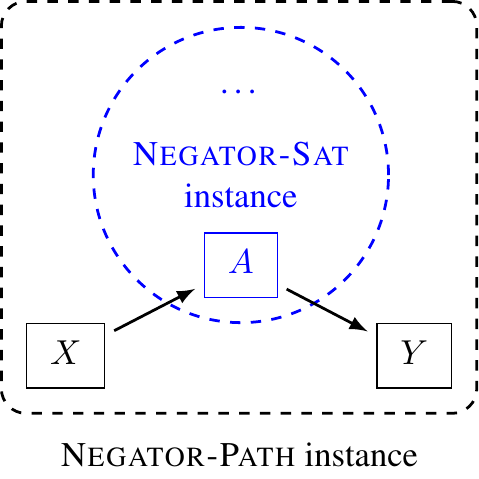}
\caption{Transformation of \NegSat/ to \NegPath/}
\label{fig:NegSatToNegPath}
\end{figure}

Let $(\T/, \mathcal{N}, A)$ be a \NegSat/ instance.
We will transform this input into a \NegPath/ instance as follows:
\begin{itemize}
	\item We introduce two additional tasks $X$ and $Y$ and
	\item introduce the communication relationships $(X, A)$ and $(A, Y)$.
\end{itemize}
It is easy to see that there exists a task assignment of the transformed input so that a path from $X$ to $Y$ exists
iff a task assignment of the original input exists so that task $A$ is assigned (with both assignments satisfying conditions~\ref{item:NegPath:Def:1} and \ref{item:NegPath:Def:2}),
cf. Figure~\ref{fig:NegSatToNegPath}.
Thus, the transformed input is an instance of \NegPath/ iff the original input is an instance of \NegSat/.

The corresponding transformation can be computed in polynomial time w.r.t. the input length:
Only two tasks and two communication relationships have to be added.

Thus, $\NegSat/ \leq_p \NegPath/$ holds.
Since $\NegSat/$ is NP-complete, the NP-hardness of \NegPath/ follows.
\end{proof}

\subsection{\NegStab/}
Informally, both \NegPath/ and \NegSat/ ask whether a stable task assignment exists so that some condition is satisfied simultaneously.

Yet, it turns out that only asking for the existence of a stable task assignment is an NP-complete problem on its own.
Consider the following problem:
\begin{definition}[\NegStab/]
	\label{def:NegStab}
	Let $\T/$ be a finite set of tasks and $\mathcal{N} \subseteq \T/ \times \T/$ a set of negator relationships among those tasks.
	
	The decision problem \NegStab/ is now stated as follows:
	Does a set of assigned tasks $\mathcal{V}\subseteq \T/$ exist (with $T \in \mathcal{V}$ iff $T$ is assigned to a PE) so that the following
	conditions are both satisfied:
	\begin{enumerate}[label=(\arabic*)]
	\item There is no task $T \in (\T/ \setminus \mathcal{V})$ that could be assigned to a PE even if all PEs had infinite computational resources,\label{item:NegStab:Def:1}
	\item $\mathcal{V}$ is a stable task assignment, i.e. all negator relationships among tasks from $\mathcal{V}$ are respected.\label{item:NegStab:Def:2}
	\end{enumerate}
\end{definition}

As with the previous problems, condition~\ref{item:NegStab:Def:1} prevents the system's computational capacities from imposing any limits on the task assignment.

We will now prove that \NegStab/ is NP-hard:
\begin{theorem}
	\label{thm:NegStabNPHard}
	\NegStab/ is NP-hard.
\end{theorem}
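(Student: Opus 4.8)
The plan is to establish NP-hardness by a polynomial-time reduction from \NegSat/; since \NegSat/ is NP-complete (Theorem~\ref{thm:NegSatNPComplete}), this suffices. The point that makes the reduction non-trivial is that \NegStab/ has \emph{no} distinguished task forced into the assignment, so the requirement ``$A \in \mathcal{V}$'' of a \NegSat/ instance must be encoded purely through additional negator relationships. The idea is therefore to append to the given instance a tiny \emph{forcing gadget} that makes assigning $A$ unavoidable in every set satisfying conditions~\ref{item:NegStab:Def:1} and~\ref{item:NegStab:Def:2}; call such a set \emph{admissible}.

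Concretely, given a \NegSat/ instance $(\T/,\mathcal{N},A)$, I would form the \NegStab/ instance $(\T/',\mathcal{N}')$ with $\T/' = \T/ \cup \{Z\}$ for a single fresh task $Z$, and $\mathcal{N}' = \mathcal{N} \cup \{(A,Z),(Z,Z)\}$ (the self-loop $(Z,Z)$ is permitted, since $\mathcal{N}$ may be any subset of $\T/ \times \T/$, including diagonal pairs). The self-loop makes $Z$ impossible to assign while respecting condition~\ref{item:NegStab:Def:2}, so $Z$ lies outside every admissible $\mathcal{V}'$; condition~\ref{item:NegStab:Def:1} then forces $Z$ to be actively blocked, i.e. some negator $(S,Z) \in \mathcal{N}'$ must have $S \in \mathcal{V}'$. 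The only incoming negators of $Z$ are $(Z,Z)$ --- useless here, as it would require $Z \in \mathcal{V}'$ --- and $(A,Z)$; hence $A \in \mathcal{V}'$ is forced.

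It then remains to check both directions of the equivalence. If $\mathcal{V}$ is admissible for the original instance with $A \in \mathcal{V}$, then the \emph{same} set $\mathcal{V}$ (with $Z$ left unassigned) is admissible for $(\T/',\mathcal{N}')$: the new negators $(A,Z)$ and $(Z,Z)$ are respected because $A \in \mathcal{V}$ and $Z \notin \mathcal{V}$, and the absence of $Z$ is justified by $(A,Z)$. Conversely, any admissible $\mathcal{V}'$ for $(\T/',\mathcal{N}')$ has $Z \notin \mathcal{V}'$ and, by the argument above, $A \in \mathcal{V}'$; its restriction to $\T/$ is then admissible for the original instance and contains $A$ --- the only point to check being that $Z$ has no outgoing negator other than the self-loop, so it can never be the task that blocks an original task in condition~\ref{item:NegStab:Def:1}. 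Since the construction adds exactly one task and two relationships, it is computable in polynomial time, so $\NegSat/ \leq_p \NegStab/$ and the theorem follows.

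The step I expect to need the most care is not the reduction itself but making precise what condition~\ref{item:NegStab:Def:1} means: with arbitrarily powerful PEs the \emph{only} obstruction to assigning a task is a negator from an already-assigned task, so an unassigned task in an admissible set must be blocked by such a negator --- and it is exactly this reading (rather than, say, ``$\mathcal{V}$ is inclusion-maximal among stable sets'', under which \NegStab/ would be trivially solvable) that makes the gadget do its job. If one prefers to avoid the (legitimate but degenerate-looking) self-loop, the same effect is obtained by replacing $Z$ by three fresh tasks carrying the odd negator cycle $(Z_1,Z_2),(Z_2,Z_3),(Z_3,Z_1)$ together with $(A,Z_1),(A,Z_2),(A,Z_3)$: an odd directed cycle of negators has no admissible set on its own, so admissibility again forces $A \in \mathcal{V}'$, which then blocks all three gadget tasks.
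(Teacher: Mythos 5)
Your proposal is correct and follows essentially the same route as the paper: a polynomial-time reduction from \NegSat/ that appends a small gadget admitting no stable assignment on its own, thereby forcing $A$ into every set satisfying conditions~\ref{item:NegStab:Def:1} and~\ref{item:NegStab:Def:2}. The paper's gadget is exactly your fallback variant (three fresh tasks on an odd negator cycle, each receiving a negator from $A$); your primary self-loop gadget is a mild simplification of the same idea.
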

\begin{proof}
	By reduction of \NegSat/ to \NegStab/:
	
	Let $I\coloneqq (\T/', \mathcal{N}, A)$ be a \NegSat/ instance.
	We will transform this input into a \NegStab/ instance as follows:
	\begin{itemize}
		\item We introduce three additional tasks $X$, $Y$, $Z$ (so that the resulting task set is $\T/ \coloneqq \T/' \cup \{X,Y,Z\}$) and
		\item introduce the negator relationships $(A,X)$, $(A,Y)$, $(A,Z)$ as well as
		$(X,Z)$, $(Z,Y)$ and $(Y,X)$.
	\end{itemize}
	
	This transformation $\tau$ is sketched in Figure~\ref{fig:NegStabConstruction}.
	It is easy to see that $\tau$ can be computed in polynomial time w.r.t. the input length:
	Only a constant number of tasks and negator relationships have to be added.
		
	\begin{figure}
		\centering
		\includegraphics[max width=\linewidth]{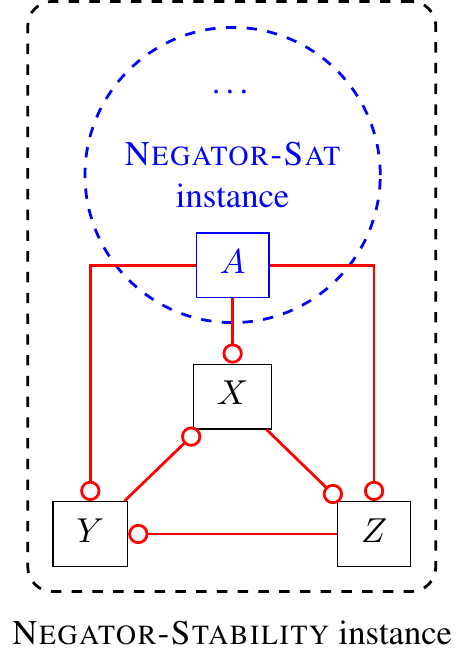}
		\caption{Transformation of \NegSat/ to \NegStab/}
		\label{fig:NegStabConstruction}
	\end{figure}
	
	Thus, it remains to be shown that
	\begin{gather*}
		I \in \NegPath/ \\
		\Updownarrow \\
		\tau(I) \in \NegStab/
	\end{gather*}
	holds.
	
	Before proving this assertion, we will first consider the ``task triangle'' given by tasks $X$, $Y$ and $Z$ as well as their negator relationships:
	It is easy to see that these tasks will never allow a stable system state.
	Instead, these tasks force the assignment to oscillate in some way:
	\begin{enumerate}[label=\arabic*),ref=\arabic*]
		\item Figure~\ref{fig:NegStabOscillationTriangle} shows the oscillation sequence that starts when one task is assigned ($X$ was arbitrarily chosen as the first task here).
			Afterwards, a second task can be assigned since it has no inbound negator
			(per condition~\ref{item:NegStab:Def:1}, this second task will be assigned).
			However, this second task will now send a negator forcing the first task to be stopped, leading to the oscillating sequence shown in the figure.\label{item:NegStabOscVariant1}
		\item Figure~\ref{fig:NegStabOscillationTriangleVariant2} shows a corner case:
			If at least three PEs exist, all three tasks might simultaneously be assigned on three different PEs, leading to the oscillation sequence shown in the figure.\footnote{This oscillation sequence can only occur if all three tasks are assigned simultaneously.
			This requires a very specific AHS constellation to be in effect;
			if less than three tasks are assigned simultaneously, oscillation variant \ref{item:NegStabOscVariant1} occurs instead.}
			\label{item:NegStabOscVariant2}
	\end{enumerate}
	
	\begin{figure*}
		\centering
		\begin{subfigure}{0.25\linewidth}
			\centering
			\includegraphics[page=1]{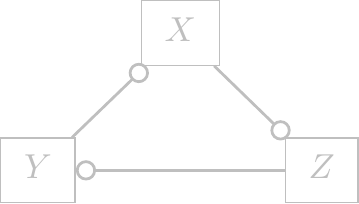}\\
			\textbf{\footnotesize(1)}
		\end{subfigure}%
		\begin{subfigure}{0.25\linewidth}
			\centering
			\includegraphics[page=2]{pics/NegStab/Oscillation}\\
			\textbf{\footnotesize(2)}
		\end{subfigure}%
		\begin{subfigure}{0.25\linewidth}
			\centering
			\includegraphics[page=3]{pics/NegStab/Oscillation}\\
			\textbf{\footnotesize(3)}
		\end{subfigure}%
		\begin{subfigure}{0.25\linewidth}
			\centering
			\includegraphics[page=4]{pics/NegStab/Oscillation}\\
			\textbf{\footnotesize(4)}
		\end{subfigure}\\[2\floatsep]
		\begin{subfigure}{0.25\linewidth}
			\centering
			\includegraphics[page=5]{pics/NegStab/Oscillation}\\
			\textbf{\footnotesize(5)}
		\end{subfigure}%
		\begin{subfigure}{0.25\linewidth}
			\centering
			\includegraphics[page=6]{pics/NegStab/Oscillation}\\
			\textbf{\footnotesize(6)}
		\end{subfigure}%
		\begin{subfigure}{0.25\linewidth}
			\centering
			\includegraphics[page=7]{pics/NegStab/Oscillation}\\
			\textbf{\footnotesize(7)}
		\end{subfigure}%
		\begin{subfigure}{0.25\linewidth}
			\centering
			\includegraphics[page=2]{pics/NegStab/Oscillation}\\
			\textbf{\footnotesize(8)}
		\end{subfigure}
		\caption{Oscillating task assignment, oscillation variant \ref{item:NegStabOscVariant1}}
		\label{fig:NegStabOscillationTriangle}
	\end{figure*}
	
	\begin{figure*}
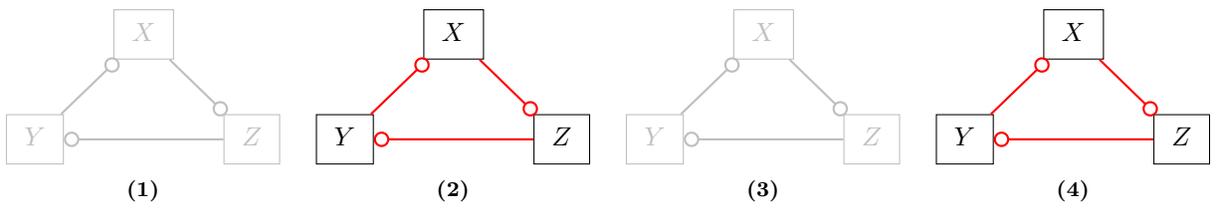

		\centering
		\begin{subfigure}{0.25\linewidth}
			\centering
			\includegraphics[page=1]{pics/NegStab/Oscillation}\\
			\textbf{\footnotesize(1)}
		\end{subfigure}%
		\begin{subfigure}{0.25\linewidth}
			\centering
			\includegraphics[page=8]{pics/NegStab/Oscillation}\\
			\textbf{\footnotesize(2)}
		\end{subfigure}%
		\begin{subfigure}{0.25\linewidth}
			\centering
			\includegraphics[page=1]{pics/NegStab/Oscillation}\\
			\textbf{\footnotesize(3)}
		\end{subfigure}%
		\begin{subfigure}{0.25\linewidth}
			\centering
			\includegraphics[page=8]{pics/NegStab/Oscillation}\\
			\textbf{\footnotesize(4)}
		\end{subfigure}%
		\caption{Oscillating task assignment, oscillation variant \ref{item:NegStabOscVariant2}}
		\label{fig:NegStabOscillationTriangleVariant2}
	\end{figure*}
	
	As a result, the transformed input's task assignment $\mathcal{V}$ will only satisfy condition~\ref{item:NegStab:Def:2}
	if task $A$ is assigned and thus prevents $X$, $Y$ and $Z$ from being assigned (and thus from oscillating).
	
	Therefore, we need to distinguish two cases:
	\begin{enumerate}[label=\alph*)]
		\item $I \in \NegSat/$:
			In this case, there exists some set of assigned tasks $\mathcal{V}'\subseteq \T/'$ that satisfies
			conditions~\ref{item:NegSat:Def:1} and \ref{item:NegSat:Def:2} from \NegSat/'s definition (which are identical
			to conditions~\ref{item:NegStab:Def:1} and \ref{item:NegStab:Def:2} from \NegStab/'s definition) and
			includes task $A$.
			
			However, since $A$ is assigned, its negator relationships prevent $X$, $Y$ and $Z$ from being instantiated.
			Thus, $\mathcal{V}'$ is a set of assigned tasks for $\tau(I)$ that satisfies conditions~\ref{item:NegStab:Def:1} and \ref{item:NegStab:Def:2} from \NegStab/:
			No further task can be assigned due to inbound negator relationships and no negator relationship is violated.
			Therefore, $\tau(I)\in \NegStab/$ must hold.
			
			As a result, $I \in \NegSat/ \Rightarrow \tau(I) \in \NegStab/$ holds.
		\item $I \not\in \NegSat/$:
			No set of assigned tasks $\mathcal{V}'\subseteq \T/'$ exists that satisfies
			conditions~\ref{item:NegSat:Def:1} and \ref{item:NegSat:Def:2} from \NegSat/'s definition
			and includes task $A$.
			
			Thus, consider any set of assigned tasks $\mathcal{V}''\subseteq \T/=\T/' \cup \{X,Y,Z\}$.
			
			There are two possibilities for task $A$'s assignment:
			\begin{enumerate}[label=b.\arabic*)]
				\item $A \in \mathcal{V}''$.
					If $A$ is assigned, condition~\ref{item:NegStab:Def:1} and/or \ref{item:NegStab:Def:2} must be violated
					since $I$ is no \NegSat/ instance and thus cannot be assigned without violating at least one of those conditions.
				\item $A \not\in \mathcal{V}''$.
					If $A$ is not assigned, the tasks $X$, $Y$ and $Z$ receive no inbound negator from task $A$ and may thus be assigned.
					However, every possible task assignment of a subset of these three tasks violates condition~\ref{item:NegStab:Def:1} and/or \ref{item:NegStab:Def:2} (cf. Figure~\ref{fig:NegStabOscillationTriangle}):
					If a task receives no inbound negator but is not assigned, condition~\ref{item:NegStab:Def:1} is violated.
					If a task receives an inbound negator and is assigned, condition~\ref{item:NegStab:Def:2} is violated.
			\end{enumerate}
			Thus, no set $\mathcal{V}''\subseteq \T/' \cup \{X,Y,Z\}$ can satisfy both conditions simultaneously.
			Therefore, $\tau(I)$ is no \NegStab/ instance.
			
			Generally, this shows that
			$I \not\in \NegSat/ \Rightarrow \tau(I) \not\in \NegStab/$, which is equivalent to
			$\tau(I) \in \NegStab/ \Rightarrow I \in \NegSat/$.
	\end{enumerate}
	Combining both cases, $I \in \NegSat/ \iff \tau(I) \in \NegStab/$ holds.
	
	As we have already argued that $\tau$ can be computed in polynomial time w.r.t. the input $I$'s length,
	$\tau$ is a polynomial-time reduction from \NegSat/ to \NegStab/.
	
	Thus, \NegStab/ must be at least as hard as \NegSat/.
	Since \NegSat/ is NP-complete (Theorem~\ref{thm:NegSatNPComplete}), the NP-hardness of \NegStab/ follows.
\end{proof}

As \NegPath/ and \NegSat/, \NegStab/ is also NP-complete:
\begin{theorem}
	\NegStab/ is NP-complete.
\end{theorem}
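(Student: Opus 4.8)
The plan is to combine the NP-hardness already established in Theorem~\ref{thm:NegStabNPHard} with a proof that \NegStab/ lies in NP; together these immediately yield NP-completeness. Since the hard direction is done, the only remaining work is to exhibit a polynomial-time verifier.

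For membership in NP, I would take a candidate set of assigned tasks $\mathcal{V}\subseteq\T/$ as the certificate. Its size is bounded by $|\T/|$ and is therefore polynomial in the input length. Given $(\T/,\mathcal{N})$ together with $\mathcal{V}$, the verifier must check conditions~\ref{item:NegStab:Def:1} and \ref{item:NegStab:Def:2} from Definition~\ref{def:NegStab}. Condition~\ref{item:NegStab:Def:2} (stability) is checked by iterating over all pairs $(S,T)\in\mathcal{N}$ and rejecting whenever both $S\in\mathcal{V}$ and $T\in\mathcal{V}$; using a precomputed membership structure for $\mathcal{V}$ this runs in time $\GrossO{|\mathcal{N}|}$. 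Condition~\ref{item:NegStab:Def:1} states that no task outside $\mathcal{V}$ could still be assigned; since idealizing the PEs to have infinite computational resources removes every obstacle other than an inbound negator from an assigned task, this condition is equivalent to requiring that every $T\in\T/\setminus\mathcal{V}$ admits some $S\in\mathcal{V}$ with $(S,T)\in\mathcal{N}$. This is decidable in time $\GrossO{|\mathcal{N}|+|\T/|}$ by marking, for each unassigned task, whether it receives a negator from an assigned task. Hence the verifier runs in polynomial time, and a certificate is accepted for some choice of $\mathcal{V}$ exactly when the instance is a yes-instance, so $\NegStab/\in\text{NP}$.

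Combining $\NegStab/\in\text{NP}$ with Theorem~\ref{thm:NegStabNPHard} gives the claim. I do not expect any genuine obstacle here; the only point that warrants care is making explicit that condition~\ref{item:NegStab:Def:1} is, once the resource constraints are idealized away, a purely local graph-theoretic property of $(\T/,\mathcal{N},\mathcal{V})$ and hence verifiable in polynomial time — the same observation underlying the NP-membership arguments for \NegPath/ and \NegSat/ in the cited papers.
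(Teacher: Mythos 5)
Your proposal matches the paper's own proof: both establish membership in NP by guessing the certificate $\mathcal{V}\subseteq\T/$ and verifying conditions~\ref{item:NegStab:Def:1} and \ref{item:NegStab:Def:2} in polynomial time (including the same reformulation of condition~\ref{item:NegStab:Def:1} as ``every unassigned task has an inbound negator from an assigned task''), then combine this with Theorem~\ref{thm:NegStabNPHard}. No gaps; this is essentially the same argument.
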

\begin{proof}
	Let $(\T/, \mathcal{N})$ be an input consisting of a set of tasks \T/
	and a set of negator relationships $\mathcal{N}$.
	
	A nondeterministic Turing machine can now nondeterministically select
	a subset $\mathcal{V}\subseteq \T/$ of assigned tasks and then
	deterministically check that conditions~\ref{item:NegStab:Def:1} and \ref{item:NegStab:Def:2}
	from Definition~\ref{def:NegStab} are both satisfied:
	\begin{enumerate}[label=(\arabic*)]
		\item This condition is satisfied if, for each $T \in (\T/ \setminus \mathcal{V})$, there is a negator relationship $(T', T) \in \mathcal{N}$ so that $T' \in \mathcal{V}$.
		This can be checked in $\GrossO(\poly(|\T/|,|\mathcal{N}|))$ time.
		\item This condition is satisfied if, for each $T \in \mathcal{V}$, there is \emph{no} negator relationship $(T', T) \in \mathcal{N}$ so that $T' \in \mathcal{V}$.
		This can also be checked in $\GrossO(\poly(|\T/|,|\mathcal{N}|))$ time.
	\end{enumerate}
	The Turing machine shall accept the input iff all three conditions are satisfied.
	
	Since, after nondeterministically guessing $\mathcal{V}$, the verification can be performed in polynomial time w.r.t. the input length,
	it follows that $\NegStab/ \in \textrm{NP}$.
	Together with Theorem~\ref{thm:NegStabNPHard}, it follows that \NegStab/ is NP-complete.
\end{proof}

As a result, all three problems are equally hard in terms of computational complexity, although this result might appear counter-intuitive:
\NegPath/ asked whether a stable task assignment exists that contains a communication path from $A$ to $B$ (and thus required at least two tasks to be assigned).
\NegSat/ relaxed this problem and only asks whether a stable task assignment exists so that a single distinguished task is assigned.
\NegStab/ further relaxes these conditions and only asks whether a stable task assignment exists at all.
Yet, in terms of computational complexity, all three problems are equally hard and cannot be decided efficiently by a deterministic computer unless $\textrm{P}=\textrm{NP}$ holds.

\section{Examples}
\label{sec:Examples}
In this section, we want to give short examples for the transformation from \dreisat/ to \NegPath/ resp. \NegSat/.
However, we only give the results of the transformations and not their theoretical construction.\footnote{
	Note that, for \NegPath/, we use the transformation shown in \cite{HutterISORC2020} and \emph{not} the one used in the alternative proof of Lemma~\ref{lemma:NegPathNPHard}.
}
For more information on this matter, please refer to \cite{HutterISORC2020} and \cite{HutterSENSYBLE2020}.

\subsection{Theoretical Transformation}
In the following, we will use the formula
\begin{equation*}
f\coloneqq (\overline{x_1}\lor \overline{x_2}\lor \overline{x_3})\land (x_1 \lor x_2 \lor x_3).
\end{equation*}
It is easy to see that $f$ is satisfiable and hence $f \in \dreisat/$.

\subsubsection{\NegPath/}
\label{sec:Examples:Theoretical:NegPath}
\begin{figure}
\centering
\includegraphics[max width=\linewidth,page=3]{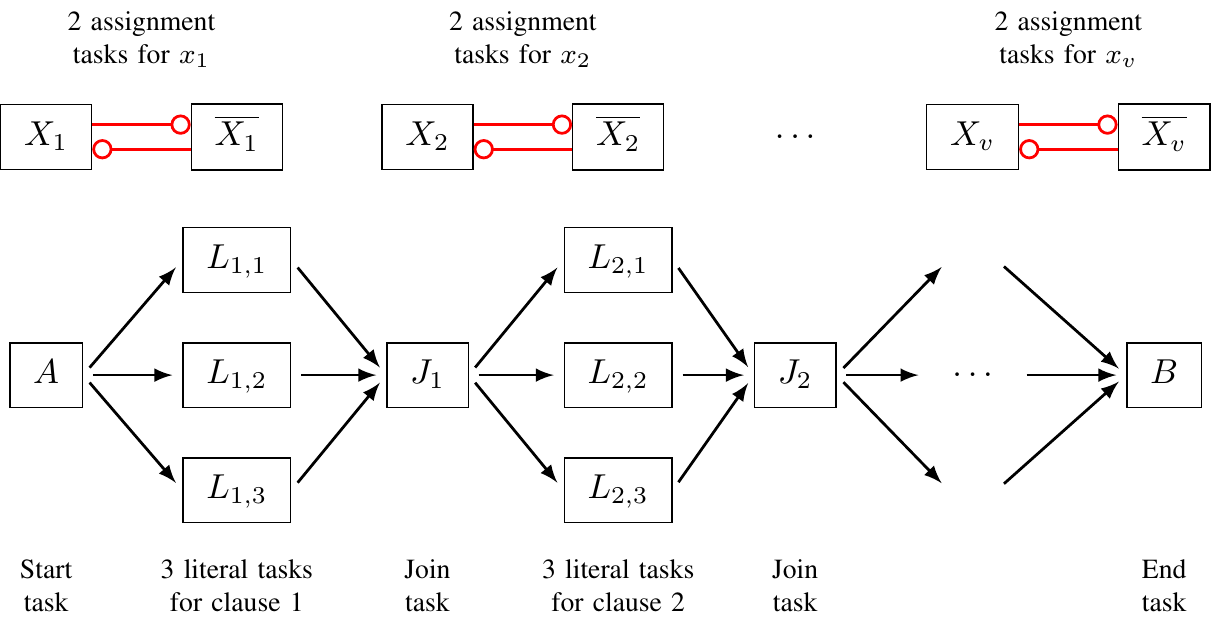}
\caption{\NegPath/ instance constructed from \dreisat/ instance $f=\textcolor{blue!50!black}{(\overline{x_1}\lor\overline{x_2}\lor\overline{x_3})}\land \textcolor{green!50!black}{(x_1\lor x_2\lor x_3)}$}
\label{fig:NegPathExampleConstruction}
\end{figure}

Figure~\ref{fig:NegPathExampleConstruction} shows the construction of a \NegPath/ instance from formula $f$.

\begin{figure*}
\begin{subfigure}{0.48\linewidth}
\centering
\includegraphics[max width=\linewidth,page=5]{pics/NegPath/Construction}
\caption{Task assignment corresponds to unsatisfying interpretation:\\$\mathcal{I}(x_1)=\mathcal{I}(x_2)=\mathcal{I}(x_3)=1$}
\label{fig:NegPathExampleInterpretedUnsatisfied}
\end{subfigure}
\hfill
\begin{subfigure}{0.48\linewidth}
\centering
\includegraphics[max width=\linewidth,page=4]{pics/NegPath/Construction}
\caption{Task assignment corresponds to satisfying interpretation:\\$\mathcal{I}(x_1)=\mathcal{I}(x_2)=0$ and $\mathcal{I}(x_3)=1$}
\label{fig:NegPathExampleInterpretedSatisfied}
\end{subfigure}
\caption{\NegPath/ instance from Figure~\ref{fig:NegPathExampleConstruction} with two task assignments. Inactive tasks as well as negator and communication relationships are shown in gray.}
\label{fig:NegPathExampleInterpreted}
\end{figure*}

Figure~\ref{fig:NegPathExampleInterpreted} shows two possible task assignments which are subsets of the task set \T/:
Figure~\ref{fig:NegPathExampleInterpretedUnsatisfied}'s task assignment corresponds to an unsatisfying interpretation of $f$ while
the one corresponding to Figure~\ref{fig:NegPathExampleInterpretedSatisfied} satisfies $f$.

It can easily be checked that the second task assignment $\mathcal{V}$ is consistent with the three conditions stated in Definition~\ref{def:NegPath}:\footnote{
	In fact, the first task assignment satisfies conditions~\ref{item:NegPath:Def:1} and \ref{item:NegPath:Def:2}, but not condition~\ref{item:NegPath:Def:3}.}
\begin{enumerate}[label=(\arabic*)]
	\item For every non-assigned task $T$, there exists a negator relationship $(T',T)\in \mathcal{C}$ with $T' \in \mathcal{V}$
	that prevents $T$ from being assigned,
	\item for each negator relationship $(T,T')\in \mathcal{N}$, $T$ and $T'$ are not simultaneously assigned,
	\item there exists a path from $A$ to $B$ using only tasks in $\mathcal{V}$.
\end{enumerate}
Therefore, $\tau(f)\in\NegPath/$ holds.
Put in simple terms, there exists at least one stable task assignment so that task $A$ can send a message to task $B$ (via task $J_1$ as an intermediary hop).

\subsubsection{\NegSat/}
\label{sec:Examples:Theoretical:NegSat}
\begin{figure}
\centering
\includegraphics[max width=\linewidth,page=3]{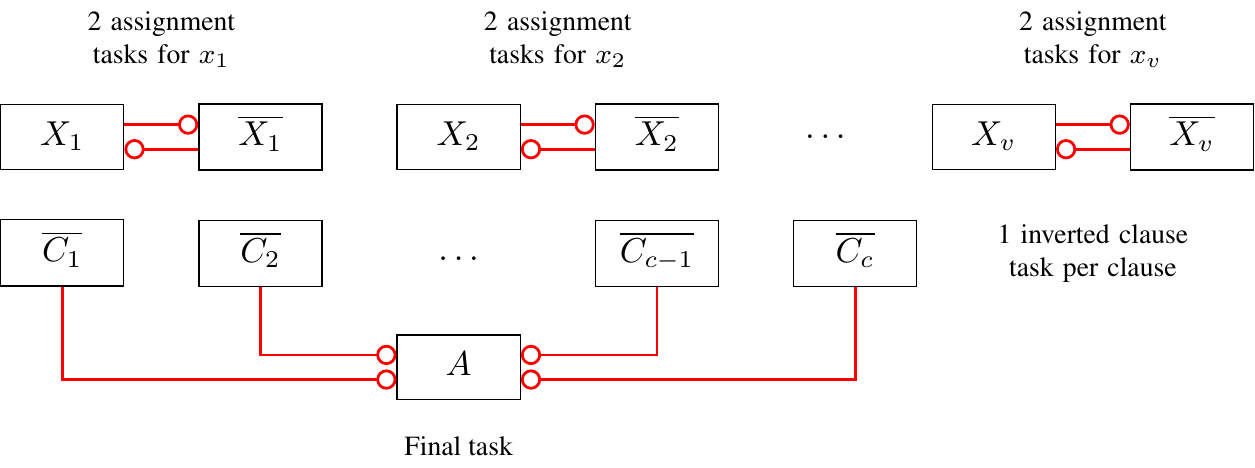}
\caption{\NegSat/ instance constructed from \dreisat/ instance $f=\textcolor{blue!50!black}{(\overline{x_1}\lor\overline{x_2}\lor\overline{x_3})}\land \textcolor{green!50!black}{(x_1\lor x_2\lor x_3)}$}
\label{fig:NegSatExampleConstruction}
\end{figure}

Figure~\ref{fig:NegSatExampleConstruction} shows the construction of a \NegSat/ instance from formula $f$.

\begin{figure*}
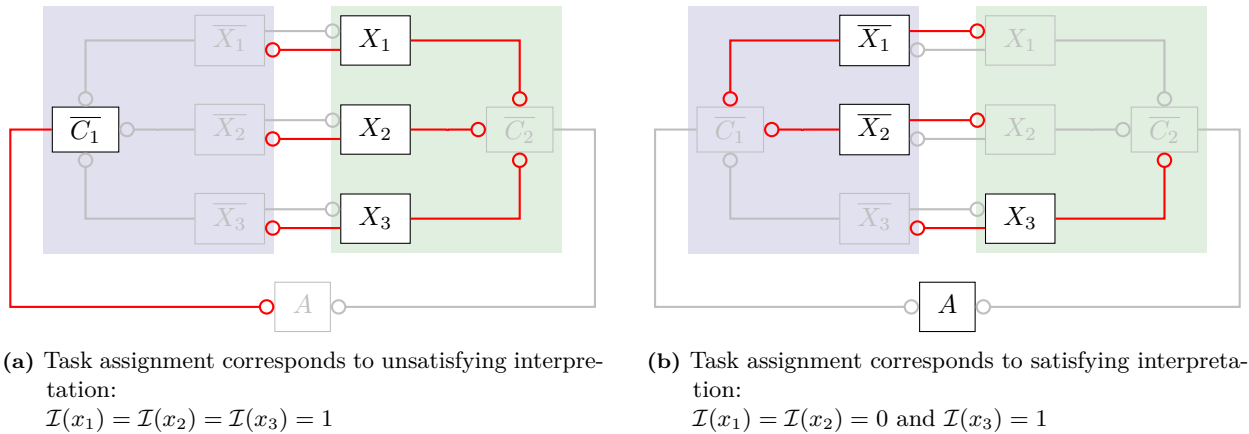

\begin{subfigure}{0.48\linewidth}
\centering
\includegraphics[max width=\linewidth,page=4]{pics/NegSat/Construction_NEGATOR-SAT}
\caption{Task assignment corresponds to unsatisfying interpretation:\\$\mathcal{I}(x_1)=\mathcal{I}(x_2)=\mathcal{I}(x_3)=1$}
\label{fig:NegSatExampleInterpretedUnsatisfied}
\end{subfigure}
\hfill
\begin{subfigure}{0.48\linewidth}
\centering
\includegraphics[max width=\linewidth,page=5]{pics/NegSat/Construction_NEGATOR-SAT}
\caption{Task assignment corresponds to satisfying interpretation:\\$\mathcal{I}(x_1)=\mathcal{I}(x_2)=0$ and $\mathcal{I}(x_3)=1$}
\label{fig:NegSatExampleInterpretedSatisfied}
\end{subfigure}
\caption{\NegSat/ instance from Figure~\ref{fig:NegSatExampleConstruction} with two task assignments. Inactive tasks as well as negator and communication relationships are shown in gray.}
\label{fig:NegSatExampleInterpreted}
\end{figure*}

Two task assignments corresponding to an unsatisfying resp. satisfying interpretation of $f$ are shown in Figure~\ref{fig:NegSatExampleInterpreted}.

It can easily be checked that the task assignment from Figure~\ref{fig:NegSatExampleInterpretedSatisfied} $\mathcal{V}$ is consistent with the three conditions stated in Definition~\ref{def:NegSat}:\footnote{
	In fact, the first task assignment satisfies conditions~\ref{item:NegSat:Def:1} and \ref{item:NegSat:Def:2}, but not condition~\ref{item:NegSat:Def:3}.}
\begin{enumerate}
	\item For every non-assigned task $T$, there exists a negator relationship $(T',T)\in \mathcal{C}$ with $T' \in \mathcal{V}$
	that prevents $T$ from being assigned,
	\item for each negator relationship $(T,T')\in \mathcal{N}$, $T$ and $T'$ are not simultaneously assigned,
	\item $A\in\mathcal{V}$ holds.
\end{enumerate}
Therefore, $\tau(f)\in\NegSat/$ holds.
Put in simple terms, there exists at least one stable task assignment so that task $A$ is assigned.

\subsubsection{\NegStab/}
Since the transformation used for \NegStab/ only involves the addition of three additional tasks and six negator relationships to an \NegSat/ instance, we omitted giving a complete example for brevity.
Please refer to Figure~\ref{fig:NegStabConstruction} instead.

\subsection{In an AHS Simulator}
In order to show that the theoretical constructions used for proving the NP-completeness of \NegPath/ and \NegSat/ can also be realized in practice, we implemented our negator extension in an AHS simulator~\cite{rentelnExaminatingTaskDistribution2008,brinkschulteOrganicRealTimeMiddleware2013}.
This simulator performs an accurate simulation of the AHS' inner workings by exchanging hormones the same way the AHS middleware does.

We then translated the constructions from sections~\ref{sec:Examples:Theoretical:NegPath} and \ref{sec:Examples:Theoretical:NegSat} into appropriate simulator configuration files.
There were two reasons we chose a simulator instead of the real AHS middleware for these scenarios:
\begin{enumerate}
	\item The simulator allows to force an initial task assignment, which is not possible in the AHS middleware.
	This allowed us to easily instantiate one specific assignment task for each variable $x_k$.
	The AHS middleware would have given us no direct control over which assignment tasks would have been instantiated.
	It shall be noted that only the assignment tasks were initially fixed, all other tasks were automatically
	instantiated by the simulator respecting all negator relationships.
	\item Using the simulator, it is easy to restrict the assignment of certain tasks to specific PEs.
	We made use of this feature in two ways:
	\begin{enumerate}
		\item For each assignment task $X_k$ resp. $\overline{X_k}$, we created one PE that can only instantiate this single task.
		This allowed us to easily switch the corresponding interpretation of the formula by temporarily suspending the PE.
		As a result, the assignment task $X_k$ resp. $\overline{X_k}$ could not be allocated on any other PE and the now missing negator hormone would cause the inverse
		assignment task $\overline{X_k}$ resp. $X_k$ to be allocated on its PE instead.
		\item We created one PE each for tasks $A$, $B$ and all join tasks that can only instantiate this specific task.
		Similarly, for each clause $c_i$ in $f$, there is exactly one PE that can only instantiate the literal tasks created from $c_i$'s literals.
		In consequence, a path from $A$ to $B$ exists iff each of those PEs is executing at least one task.
		This allows to check the existence of such path at a glance, especially since the AHS identifies tasks by an integer
		which would otherwise make this check rather complex for the human eye.
	\end{enumerate}
\end{enumerate}

\subsubsection{\NegPath/}
\begin{figure*}
\begin{subfigure}{0.48\linewidth}
\centering
\includegraphics[max width=\linewidth]{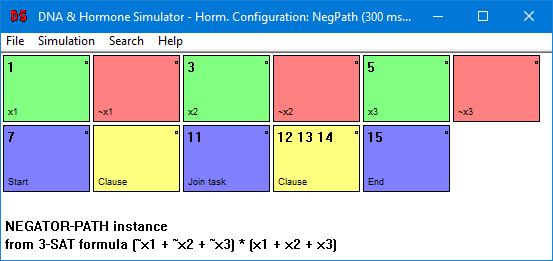}
\caption{Task assignment corresponds to unsatisfying interpretation: $\mathcal{I}(x_1)=\mathcal{I}(x_2)=\mathcal{I}(x_3)=1$}
\label{fig:NegPathExampleSimulatorUnsatisfied}
\end{subfigure}
\hfill
\begin{subfigure}{0.48\linewidth}
\centering
\includegraphics[max width=\linewidth]{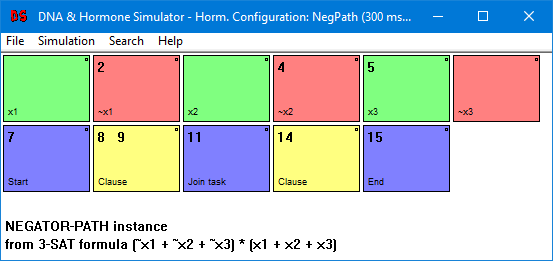}
\caption{Task assignment corresponds to satisfying interpretation: $\mathcal{I}(x_1)=\mathcal{I}(x_2)=0$ and $\mathcal{I}(x_3)=1$}
\label{fig:NegPathExampleSimulatorSatisfied}
\end{subfigure}
\caption{\NegPath/ instance from Figure~\ref{fig:NegSatExampleInterpreted} in AHS simulator. A path from task $A=7$ to $B=15$ exists iff at least one task is running on PE in the second row.}
\label{fig:NegPathExampleSimulator}
\end{figure*}

Figure~\ref{fig:NegPathExampleSimulator} shows the \NegPath/ construction from Figure~\ref{fig:NegPathExampleConstruction} in the simulator for the task assignments from Figure~\ref{fig:NegPathExampleInterpreted}.
Each colored box represents a PE while an integer $i$ denotes that task $i$ is running on the corresponding PE.
The top row of PEs is responsible for instantiating the assignment tasks $1$ to $6$
while the PEs on the bottom row are responsible for instantiating the tasks $A=7$, $B=15$, the join task $11$
as well as all literal tasks.
Thus, a path from $A$ to $B$ exists iff there is at least one task running per bottom-row-PE:
This is false for the unsatisfying interpretation in Figure~\ref{fig:NegPathExampleSimulatorUnsatisfied}, but true for the satisfying one
in Figure~\ref{fig:NegPathExampleSimulatorSatisfied}.

As a consequence, only the second task assignment allows task $A$ to send a message (via multiple hops that might transform it along the way) to task $B$.

\subsubsection{\NegSat/}
\begin{figure*}
\begin{subfigure}{0.48\linewidth}
\centering
\includegraphics[max width=\linewidth]{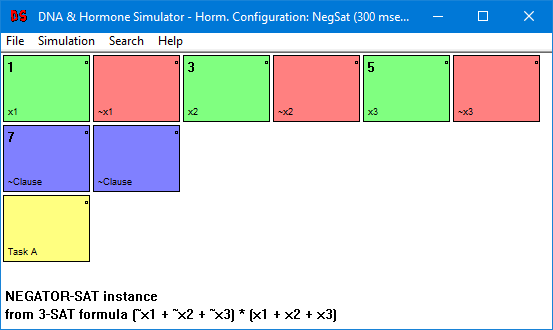}
\caption{Task assignment corresponds to unsatisfying interpretation: $\mathcal{I}(x_1)=\mathcal{I}(x_2)=\mathcal{I}(x_3)=1$}
\label{fig:NegSatExampleSimulatorUnsatisfied}
\end{subfigure}
\hfill
\begin{subfigure}{0.48\linewidth}
\centering
\includegraphics[max width=\linewidth]{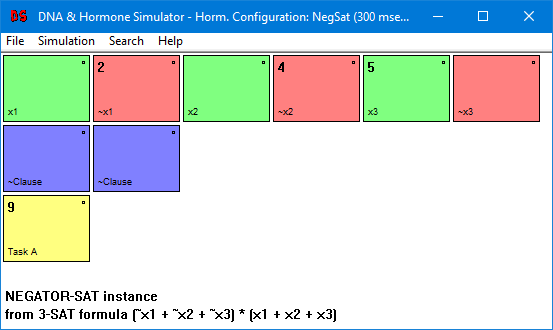}
\caption{Task assignment corresponds to satisfying interpretation: $\mathcal{I}(x_1)=\mathcal{I}(x_2)=0$ and $\mathcal{I}(x_3)=1$}
\label{fig:NegSatExampleSimulatorSatisfied}
\end{subfigure}
\caption{\NegSat/ instance from Figure~\ref{fig:NegSatExampleInterpreted} in AHS simulator}
\label{fig:NegSatExampleSimulator}
\end{figure*}

Figure~\ref{fig:NegSatExampleSimulator} shows the \NegSat/ construction from Figure~\ref{fig:NegSatExampleConstruction} in the simulator for the same task assignments as shown in Figure~\ref{fig:NegSatExampleInterpreted}.

The first row of PEs is again responsible for instantiating the assignment tasks $1$ to $6$ while the blue PEs on the second row are responsible
for instantiating the inverted clause tasks $7$ and $8$.
Finally, the yellow PE in the third row is responsible for instantiating task $A=9$.
As a result, the interpretation corresponding to the task assignment satisfies the formula $f$ iff task $9$ is running on the yellow PE:
This is false for the unsatisfying interpretation in Figure~\ref{fig:NegSatExampleSimulatorUnsatisfied}, but true for the satisfying one
in Figure~\ref{fig:NegSatExampleSimulatorSatisfied}.
It can also easily be seen from example that the inverted clause tasks are assigned iff none of the assignment tasks corresponding to their clause's literals are assigned
and task $A$ is assigned iff none of the inverted clause tasks are assigned.

\subsubsection{\NegStab/}
\begin{figure*}
\begin{subfigure}{0.48\linewidth}
\centering
\includegraphics[max width=\linewidth]{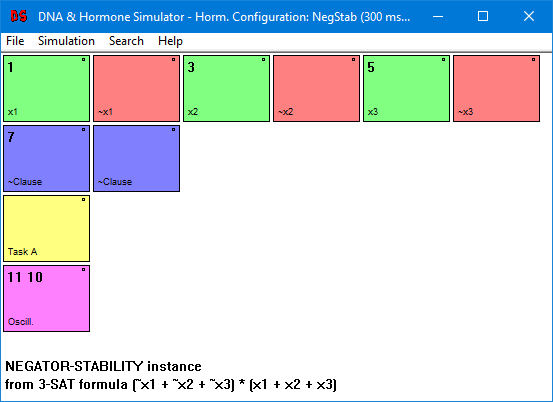}
\caption{Task assignment corresponds to unsatisfying interpretation: $\mathcal{I}(x_1)=\mathcal{I}(x_2)=\mathcal{I}(x_3)=1$}
\label{fig:NegStabExampleSimulatorUnsatisfied}
\end{subfigure}
\hfill
\begin{subfigure}{0.48\linewidth}
\centering
\includegraphics[max width=\linewidth]{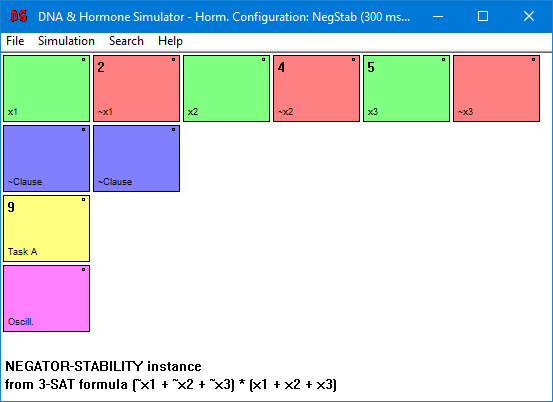}
\caption{Task assignment corresponds to satisfying interpretation: $\mathcal{I}(x_1)=\mathcal{I}(x_2)=0$ and $\mathcal{I}(x_3)=1$}
\label{fig:NegStabExampleSimulatorSatisfied}
\end{subfigure}
\caption{\NegStab/ instance for $f=(\overline{x_1}\lor\overline{x_2}\lor\overline{x_3})\land (x_1\lor x_2\lor x_3)$ in AHS simulator}
\label{fig:NegStabExampleSimulator}
\end{figure*}

Figure~\ref{fig:NegStabExampleSimulator} shows a \NegStab/ instance constructed from formula $f$ in the simulator.
The first three rows of PEs are identical to the \NegSat/ instance from Figure~\ref{fig:NegSatExampleSimulator},
the PE in the fourth row is responsible for instantiating the three oscillating tasks $X=10$, $Y=11$ and $Z=12$ (cf. Figure~\ref{fig:NegStabConstruction}).

In case of the unsatisfying interpretation shown in Figure~\ref{fig:NegStabExampleSimulatorUnsatisfied}, task $A$ is not assigned and thus tasks $10$, $11$ and $12$ cause the assignment to oscillate.
In fact, the snapshot shows both $11$ and $10$ assigned, corresponding to Figure~\ref{fig:NegStabOscillationTriangle}, sequence~(3).

In contrast, Figure~\ref{fig:NegStabExampleSimulatorSatisfied} shows a satisfying interpretation of $f$.
Here, task $A=9$ can be assigned and no task can be instantiated on the PE in the fourth row, causing the task assignment to be stable.

\subsubsection{Summary}
These examples show that the theoretical considerations regarding the problems \NegPath/, \NegSat/ and \NegStab/ can also be applied to a real implementation of our negator concept.
Thus, these problems are highly relevant when designing systems using negator relationships:
If a message can never reach its destination (no \NegPath/ instance), some task cannot be instantiated in a stable system at all (no \NegSat/ instance) or the system is not stable (no \NegStab/ instance), the resulting system most probably contains some design mistake.

The next section will deal with another aspect of negators that explains \emph{why} problems involving negator relationships are computationally hard.

\section{Constructing Logic Circuits using Negators}
\label{sec:boolLogicWithNegators}
In the previous sections, we have seen that various negator-related problems are computationally hard.
In order to explain \emph{why} this is the case, we will now look at these problems from a different perspective.
To do so, we will primarily generalize the concepts used for the reduction from \dreisat/ to \NegSat/ as shown in \cite{HutterSENSYBLE2020} (cf. Figure~\ref{fig:NegSatExampleConstruction}).
Nevertheless, the results are also applicable to \NegPath/ and \NegStab/.

\subsection{Interpreting Negator Hormones as Logic Levels}
In the following, we will interpret negator hormones and the tasks sending them as boolean logic levels.
Specifically, the following convention will be used:
\begin{itemize}
	\item HIGH $\iff$ negator hormone is sent\newline
		\phantom{HIGH} $\iff$ sending task is assigned,
	\item \phantomas[l]{HIGH}{LOW} $\iff$ negator hormone is \emph{not} sent\newline
		\phantom{HIGH} $\iff$ sending task is \emph{not} assigned.
\end{itemize}
This interpretation allows to construct arbitrary logic gates using negators.

\begin{figure}
\centering
\includegraphics[page=1]{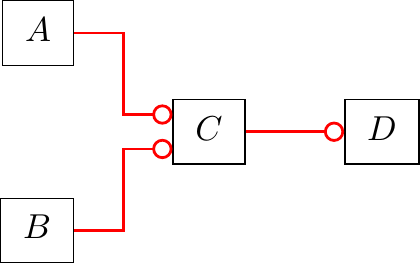}
\caption{Task set with negators behaving like a NOR gate}
\label{fig:NORWithNegators}
\end{figure}

\begin{table}
	\begin{subtable}[t]{0.48\linewidth}
		\centering
		\begin{tabular}{cc|c}
			\toprule
			$A$ & $B$ & \boldmath$C$\unboldmath \\
			\midrule
			0 & 0 & \textbf{1} \\
			0 & 1 & \textbf{0} \\
			1 & 0 & \textbf{0} \\
			1 & 1 & \textbf{0} \\
			\bottomrule
		\end{tabular}
		\caption{Assignment of task $C$ in dependence of $A$ and $B$'s assignment (1 $\iff$ task is assigned)}
		\label{tbl:Assignment}
	\end{subtable}\hfill%
	\begin{subtable}[t]{0.48\linewidth}
		\centering
		\begin{tabular}{cc|c}
			\toprule
			$a$ & $b$ & \boldmath$a \overline{\lor} b$\unboldmath \\
			\midrule
			0 & 0 & \textbf{1} \\
			0 & 1 & \textbf{0} \\
			1 & 0 & \textbf{0} \\
			1 & 1 & \textbf{0} \\
			\bottomrule
		\end{tabular}
		\caption{Truth table of NOR gate}
		\label{tbl:NORGate}
	\end{subtable}
	\caption{Assignment of task $C$ compared to truth table of NOR gate}
\end{table}

Consider the task set with negator relationships shown in Figure~\ref{fig:NORWithNegators}.
The negator hormone from task $C$ to $D$ will be sent iff $C$ is assigned to some PE.
However, $C$ will only be assigned if no negator hormone is sent to $C$.
Thus, the negator hormone from $C$ to $D$ will only be sent if neither $A$ nor $B$ send a negator hormone to $C$.
Equivalently, $C$ will only be assigned if neither $A$ nor $B$ are assigned.
$C$'s assignment status in dependence of $A$ and $B$'s assignment states is shown in Table~\ref{tbl:Assignment}.
However, when compared to Table~\ref{tbl:NORGate}, it can be seen that $C$'s assignment matches the formula $c=a\, \overline{\lor}\, b$ and thus a NOR gate's behavior.

Since NOR on its own forms a functionally complete operator set, arbitrary logic gates can be constructed using the NOR gate as a building block.

\subsection{CNF to Task Set with Negators}
A formula $f$ in conjunctive normal form (CNF) can easily be represented as a task set with negator relationships
by realizing it as a two-stage NOR circuit.
This can be achieved by double negation and application of De Morgan's laws:
\begin{align*}
	f &= (\overline{x_1}\lor \overline{x_2} \lor \overline{x_3}) \land (x_1 \lor x_2 \lor x_3) \\
	&\equiv \overline{\overline{(\overline{x_1}\lor \overline{x_2} \lor \overline{x_3}) \land (x_1 \lor x_2 \lor x_3)}} \\
	&\equiv \overline{\overline{(\overline{x_1}\lor \overline{x_2} \lor \overline{x_3})} \lor \overline{(x_1 \lor x_2 \lor x_3)}}
\end{align*}
Figure~\ref{fig:CnfNorExample} shows the graphical construction of $f$ as a two-stage NOR circuit and the conversion to a task set with negators using Figure~\ref{fig:NORWithNegators} as a building block:
Task $\overline{C_1}$ and $\overline{C_2}$ can only be assigned if \emph{none} of their respective literal tasks are assigned;
task $A$ can only be assigned if neither $\overline{C_1}$ nor $\overline{C_2}$ are assigned.

\begin{figure*}
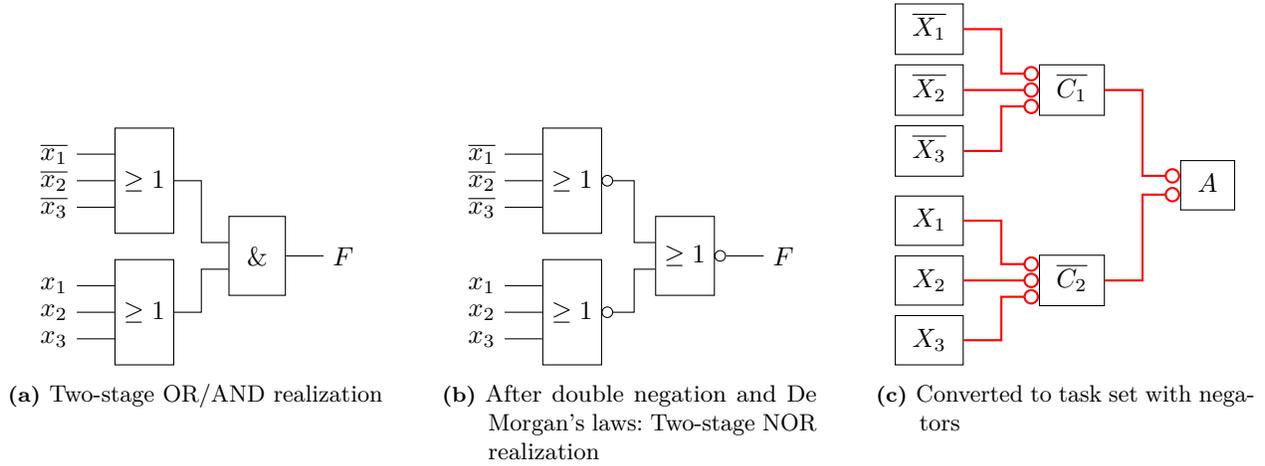

	\centering
	\begin{subfigure}[t]{0.3\linewidth}
	\centering
	\includegraphics[max width=\linewidth,page=2]{pics/Logic/NegatorGates}
	\caption{Two-stage OR/AND realization}
	\end{subfigure}
	\hfill
	\begin{subfigure}[t]{0.3\linewidth}
	\centering
	\includegraphics[max width=\linewidth,page=3]{pics/Logic/NegatorGates}
	\caption{After double negation and De Morgan's laws: Two-stage NOR realization}
	\end{subfigure}
	\hfill
	\begin{subfigure}[t]{0.3\linewidth}
	\centering
	\includegraphics[max width=\linewidth,page=4]{pics/Logic/NegatorGates}
	\caption{Converted to task set with negators}
	\label{fig:CnfNorExample:Negators}
	\end{subfigure}
	\caption{Construction of two-stage NOR circuit from CNF formula and conversion to task set with negators}
	\label{fig:CnfNorExample}
\end{figure*}

In fact, this construction has already been used implicitly for transforming \dreisat/ to \NegSat/ in Theorem~\ref{thm:NegSatNPComplete}'s proof (cf. \cite{HutterSENSYBLE2020} and Figure~\ref{fig:NegSatExampleConstruction}), albeit the transformation used
an additional building block to ensure that exactly one of the tasks $x_k$ and $\overline{x_k}$ is assigned at any time, thus forcing a consistent formula interpretation by means of mutual exclusion.
As this building block is missing from the construction shown in Figure~\ref{fig:CnfNorExample:Negators}, the assignment tasks could be assigned in a conflicting way.

\subsection{Flip-Flops with Negators}
In fact, the mentioned mutual exclusion building block (as shown in Figure~\ref{fig:AssignmentTasks}) behaves like a SR latch built from NOR gates.
Figure~\ref{fig:SRlatchNOR} shows such a SR latch while Figure~\ref{fig:SRlatchNegators} depicts a task set with negators behaving identically.
This is achieved by using the mutual exclusion building block from Figure~\ref{fig:AssignmentTasks} resp. the NOR building block from Figure~\ref{fig:NORWithNegators}:
In a stable system, either task $Q$ or task $\overline{Q}$ may be assigned while the other task is blocked by the sent negator.
By temporarily sending a negator ($R$ resp. $S$) to the assigned task, the latch's current value can be flipped.

\begin{figure*}
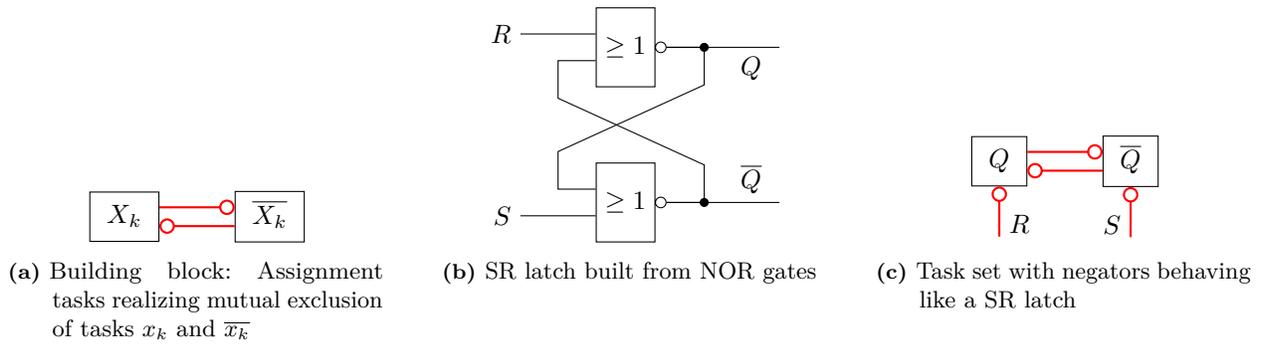

	\centering
	\begin{subfigure}[t]{0.3\linewidth}
	\centering
	\includegraphics[max width=\linewidth,page=5]{pics/Logic/NegatorGates}
	\caption{Building block: Assignment tasks realizing mutual exclusion of tasks $x_k$ and $\overline{x_k}$}
	\label{fig:AssignmentTasks}
	\end{subfigure}
	\hfill
	\begin{subfigure}[t]{0.3\linewidth}
	\centering
	\includegraphics[max width=\linewidth,page=6]{pics/Logic/NegatorGates}
	\caption{SR latch built from NOR gates}
	\label{fig:SRlatchNOR}
	\end{subfigure}
	\hfill
	\begin{subfigure}[t]{0.3\linewidth}
	\centering
	\includegraphics[max width=\linewidth,page=7]{pics/Logic/NegatorGates}
	\caption{Task set with negators behaving like a SR latch}
	\label{fig:SRlatchNegators}
	\end{subfigure}
	\caption{Construction of flip-flops using task sets with negators}
\end{figure*}

In theory, this allows to construct more complex flip-flops (e.g. gated D latches) or even whole sequential circuits by appropriately combining different building blocks although the practical relevance of their \emph{intentional} construction seems questionable at best.

\subsection{Oscillating Task Assignments}
\label{sec:boolLogicWithNegators:Oscillation}
Figure~\ref{fig:Oscillation:Inverter} shows a ring oscillator circuit built from three inverter gates.
Since an inverter can also be considered as a NOR gate with a single input, it is easy to adapt this circuit to negator logic as shown in Figure~\ref{fig:Oscillation:Negator}.

\begin{figure*}
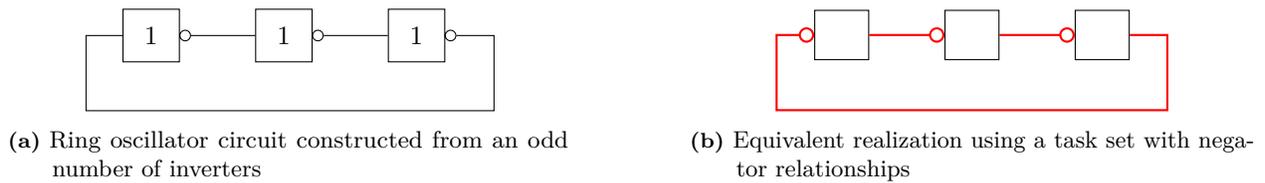

	\centering
	\begin{subfigure}[t]{0.45\linewidth}
	\centering
	\includegraphics[max width=\linewidth,page=9]{pics/Logic/NegatorGates}
	\caption{Ring oscillator circuit constructed from an odd number of inverters}
	\label{fig:Oscillation:Inverter}
	\end{subfigure}
	\hfill
	\begin{subfigure}[t]{0.45\linewidth}
	\centering
	\includegraphics[max width=\linewidth,page=8]{pics/Logic/NegatorGates}
	\caption{Equivalent realization using a task set with negator relationships}
	\label{fig:Oscillation:Negator}
	\end{subfigure}
	\caption{Construction of a ring oscillator circuit}
	\label{fig:Oscillation}
\end{figure*}

In fact, we have already encountered this gadget in the reduction from \NegSat/ to \NegStab/ (cf. Theorem~\ref{thm:NegStabNPHard}).
The negators' oscillation pattern known from Figure~\ref{fig:NegStabOscillationTriangle} corresponds to the one realized by the three inverters while the pattern from Figure~\ref{fig:NegStabOscillationTriangleVariant2} is a peculiarity of the AHS and its implementation.

As oscillating task sets obviously prevent a system from ever reaching a stable task assignment,
they are yet another pitfall to avoid while employing negator relationships for designing systems.

\subsection{Sequential Logic}
\label{sec:boolLogicWithNegators:Sequential}
In theory, it should even be possible to realize sequential logic circuits with negators using the flip-flop and ring oscillator building blocks.
As an example, consider a \dreisat/ formula $f$ transformed to a \NegSat/ instance (cf. section~\ref{sec:Examples:Theoretical:NegSat}).
\NegSat/ only asks whether a task assignment exists so that the system is stable and some distinguished task $A$ is assigned, but does not require the AHS to find such task assignment on its own.

\begin{figure*}
	\centering
	\includegraphics[width=\linewidth]{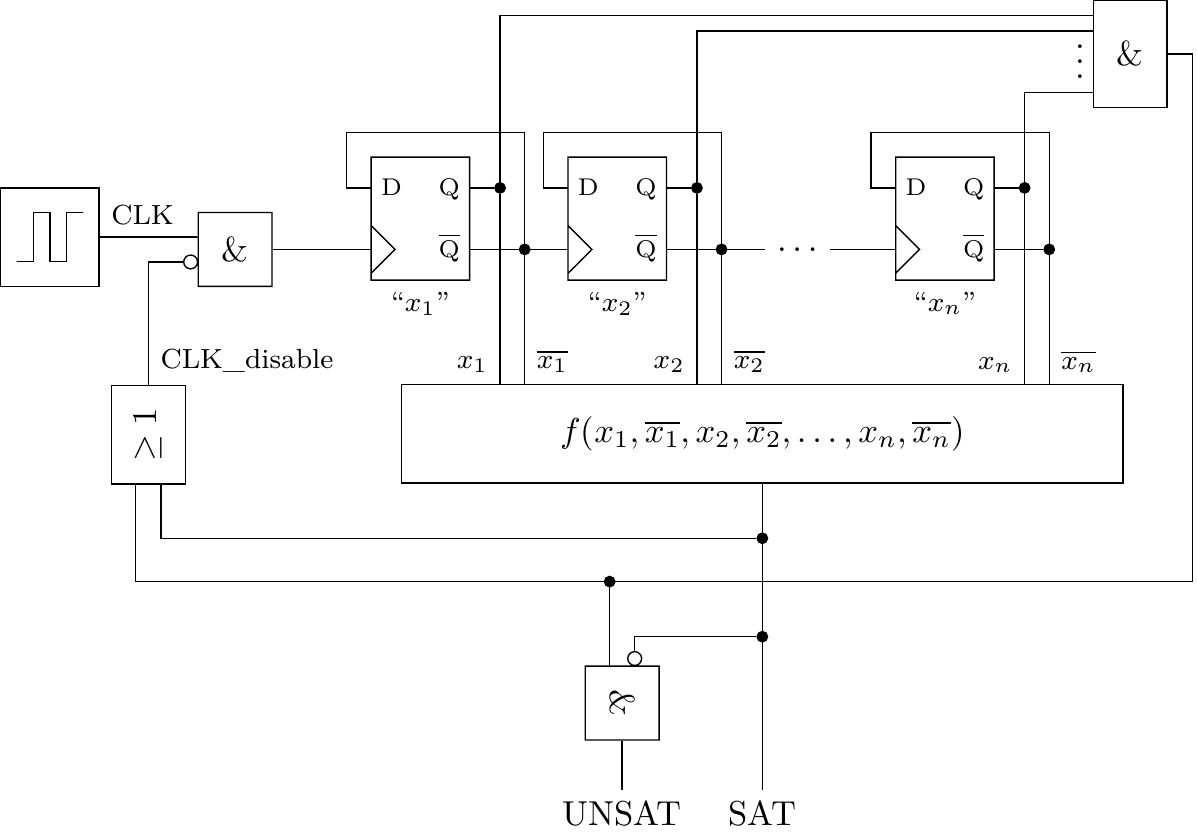}
	\caption{Example of a sequential logic circuit that could be implemented using tasks and negator relationships}
	\label{fig:SequentialSatSolver}
\end{figure*}

Using sequential logic, one could actually realize a brute force SAT solver like the example shown in Figure~\ref{fig:SequentialSatSolver} inside the AHS using tasks and negator relationships.
This circuit uses one flip-flop per variable $x_i$ that is triggered on a rising clock edge.
Starting with $x_1=x_2=\dots=x_n=0$, the circuit sequentially evaluates $f$ for each of the $2^n$ input combinations.
Once either all combinations were tested ($x1=x_2=\dots=x_n=1$) or $f$ evaluates to $1$, the clock is disabled by pulling it to $0$.
One of the lines SAT resp. UNSAT rises to $1$ once the clock is stopped and gives the final result regarding $f$'s satisfiability.

\section{Conclusion}
\label{sec:Conclusion}
In this report, we revisited the \NegPath/ and \NegSat/ problems originally introduced in \cite{HutterISORC2020} and \cite{HutterSENSYBLE2020}.
First, we presented an alternative proof for \NegPath/'s NP-hardness by utilizing \NegSat/'s NP-completeness.
Then, the new problem \NegStab/ was introduced and also proved to be NP-complete.

In addition, we presented examples of \NegPath/, \NegSat/ and \NegStab/ instances in an AHS simulator.
This proved that the theoretical results are applicable to real-world implementations of the negator concept and thus highly relevant when designing systems using negator relationships.

Moreover, we have shown building blocks composed from tasks and negator relationships that behave like NOR gates and flip-flops and even ring oscillators.
In theory, combining these building blocks allows the construction of sequential logic circuits using tasks and negator relationships.
Of course, the practical relevance of constructing arbitrary logic circuits with negators is questionable to say the least:
Using tasks and negators to build a circuit and evaluating it by (ab-)using the AHS implies a \emph{massive} resource and performance overhead while an equivalent function could simply be realized as a single task computing the result.

Nevertheless, this is an interesting theoretical result as it again shows the power introduced by negators and especially answers \emph{why} seemingly simple questions such as
\begin{itemize}
	\item ``Does a stable task assignment exist so that task $A$ may send a message to $B$ (via multiple hops)?'' (\NegPath/),
	\item ``Does a stable task assignment exist that includes task $A$?'' (\NegSat/) or even
	\item ``Does a stable task assignment exist?'' (\NegStab/)
\end{itemize}
are hard to answer when dealing with negator relationships:
Negators can be used to represent arbitrary logic circuits, thus explaining why the reduction of \dreisat/ to the mentioned problems was successful.

In future work, we plan to conduct research on additional aspects of negators.
Since even checking whether a stable task assignment exists for a given task set at all, it is an interesting question if the use of negators can be restricted in a way that still allows to express sufficiently powerful task relationships while lowering the complexity of various decision problems.

This is also a prerequisite for investigating the timing aspects of negators.
Since the original AHS (without negators) guarantees tight bounds for the duration of its self-configuration and self-healing after PE failures, it is an interesting question which timing guarantees can be given when negators are present.
Of course, this requires a stable system or its self-configuration's duration cannot be bounded at all.

\bibliography{Literature}
\bibliographystyle{IEEEtran}
\end{document}